\newcommand{\R}{\mathbb{R}}
\newcommand{\Sym}{\operatorname{Sym}}
\newcommand{\norm}[1]{\left\lVert#1\right\rVert}
\newcommand{\set}[1]{\{#1\}}
\newcommand{\inp}[1]{\langle #1 \rangle}
\newtheorem{theorem}{Theorem}[section]
\newtheorem{lemma}[theorem]{Lemma}
\newtheorem{remark}[theorem]{Remark}
\newenvironment{proof}[1][Proof]{\textbf{#1.} }{\ \rule{0.5em}{0.5em}}
\title{Feedback Gradient Descent:\\ Efficient and Stable Optimization with Orthogonality for DNNs}
\author{
    Fanchen Bu, Dong Eui Chang
}
\begin{document}

\maketitle

\begin{abstract}
The optimization with orthogonality has been shown useful in training deep neural networks (DNNs). 
To impose orthogonality on DNNs, both computational efficiency and stability are important.
However, existing methods utilizing Riemannian optimization or hard constraints can only ensure stability while those using soft constraints can only improve efficiency.
In this paper, we propose a novel method, named Feedback Gradient Descent (FGD), to our knowledge, the \textit{first} work showing \textit{high efficiency and stability simultaneously}.
FGD induces orthogonality based on the simple yet indispensable Euler discretization of a continuous-time dynamical system on the tangent bundle of the Stiefel manifold.
In particular, inspired by a numerical integration method on manifolds called Feedback Integrators, we propose to instantiate it on the tangent bundle of the Stiefel manifold for the \textit{first} time.
In the extensive image classification experiments, FGD comprehensively outperforms the existing state-of-the-art methods in terms of accuracy, efficiency, and stability.
\end{abstract}

\section*{Introduction}
During the prosperous and ongoing development of deep neural networks (DNNs), it has been shown that imposing orthogonality on the parameters can help improve the network performance, which has attracted substantial attention with theoretical analyses \cite{saxe2013exact, desjardins2015natural, harandi2016generalized}.

Many researchers have been trying to practically impose orthogonality during the training of DNNs.
As orthogonality can be regarded as a property of the Stiefel manifold, Riemannian optimization is a typical and direct technique to maintain orthogonality, which has been widely studied in the optimization field \cite{smith1994optimization, edelman1998geometry, rapcsak2002minimization, absil2009optimization, absil2012projection, wen2013feasible, bonnabel2013stochastic, jiang2015framework}.
However, most Riemannian optimization algorithms are computationally expensive due to the complex retraction or projection. They are especially computationally prohibitive when applied to the state-of-the-art DNNs that have numerous parameters \cite{ozay2018training}, which has been demonstrated in the comparisons done by a previous work \cite{li2020efficient}.
Without great loss of orthogonality, an alternative is to find a transformation or mapping, keeping the hard constraints on orthogonality and allowing normal Euclidean optimization on the transformed or mapped parameters \cite{huang2018orthogonal, li2020efficient}. Although these algorithms are much faster than Riemannian optimization algorithms, they are still considerably time-consuming when applied to DNNs.
To further improve the efficiency, another intuitive way is to apply soft constraints \cite{rodriguez2016regularizing, xie2017all, bansal2018can, jia2017improving, huang2020controllable, Wang_2020_CVPR} by adding a penalty term into the loss function. Nevertheless, such a way fails to maintain orthogonality during the training process. Therefore, it is imperative to develop a new algorithm with both high efficiency and high numerical stability.

In this paper, we propose a novel method, named Feedback Gradient Descent (FGD), to our knowledge, the \textit{first} work showing \textit{high efficiency and stability simultaneously}.
FGD induces orthogonality based on the simple Euler discretization of a continuous-time dynamical system (CTDS).
In particular, we start from building a CTDS on the tangent bundle of the Stiefel manifold to represent the gradient descent (GD) process \textit{with momentum} and orthogonality by following the idea of the Euclidean counterparts, i.e., the corresponding CTDSs of GD \cite{baldi1995gradient}, especially GD \textit{with momentum} \cite{qian1999momentum}.

To construct the corresponding optimization algorithm, it is indispensable yet nontrivial to discretize the CTDS on the tangent bundle.
The direct application of usual discretization techniques such as Euler's method for ordinary differential equations is inadequate because the trajectories of the discrete-time system can go off the manifold.
Meanwhile, usual structure-preserving discretization methods such as variational or symplectic integrators \cite{haier2006geometric} are complex and computationally expensive.
Inspired by Feedback Integrators (FI) \cite{chang2016feedback}, which is a numerical integration method for CTDSs on manifolds, we convert the discretization problem on the tangent bundle to one on a Euclidean space.
Specifically, following the framework of FI, we extend the CTDS representing GD on the tangent bundle to an ambient Euclidean space and modify the extended CTDS by adding a feedback term pulling the variables back to the tangent bundle so that the tangent bundle becomes a local attractor.
Moreover, we prove a new theorem, Theorem \ref*{thm:main_thm}, to directly show exponential stability while FI only guarantees asymptotic stability.
Although several applications of FI have been proposed \cite{chang2018controller, chang2019feedback, chang2019model, ko2021tracking, park2021transversely}, we are the first to instantiate the FI framework for a CTDS on the tangent bundle of the Stiefel manifold, as well as the first to apply the framework to an optimization problem, especially the GD problem with orthogonality.
Since the constructed CTDS is in a Euclidean space, we can efficiently discretize it without using any time-consuming operations with the stability carried over to the discretized system.

In summary, the major theoretical contributions are fourfold.
\begin{itemize}[leftmargin=*]
    \item To the best of our knowledge, FGD is the \textit{first} optimization algorithm that imposes \textit{orthogonality} with \textit{momentum} for DNNs based on a \textit{CTDS};
    \item We are the \textit{first} to instantiate the FI framework for a CTDS on the tangent bundle of the \textit{Stiefel manifold}, and also the first to apply the framework to the \textit{GD} problem;
    \item We generalize and improve the idea of FI, providing a new theorem (Theorem \ref*{thm:main_thm}) showing \textit{exponential stability} rather than the original asymptotic stability;
    \item FGD provides a way to apply optimization \textit{directly} on the parameters and to maintain their \textit{orthogonality} without using any time-consuming operations.
\end{itemize}

We conduct extensive experiments on the image classification task. We use a range of widely-used DNNs such as WideResNet \cite{zagoruyko2016wide}, ResNet \cite{he2016deep, he2016identity}, VGG \cite{simonyan2014very}, and PreActResNet \cite{he2016identity}, and the popular benchmark datasets such as CIFAR-10/100 \cite{krizhevsky2009learning}, SVHN \cite{netzer2011reading}, and ImageNet \cite{deng2009imagenet}. We compare FGD with stochastic gradient descent (SGD) \cite{robbins1951stochastic, kiefer1952stochastic} \textit{with momentum} and several state-of-the-art methods imposing soft or hard orthogonality \cite{rodriguez2016regularizing, jia2017improving, huang2018orthogonal, bansal2018can, li2020efficient, Wang_2020_CVPR, huang2020controllable}. FGD consistently outperforms the other baseline methods in terms of accuracy with favorable efficiency and stability. Especially, FGD achieves the highest accuracy in most cases and consumes less or similar training time than those of the methods using soft constraints. It also shows strong numerical stability comparable to previous methods imposing hard constraints.

\section*{Related Work}\label{sec_rel_work}
During the development of neural networks, orthogonality was first shown to be useful in mitigating the vanishing or exploding gradients problem \cite{bengio1994learning}, especially on recurrent neural networks (RNNs) \cite{pascanu2013difficulty, le2015simple, wisdom2016full, arjovsky2016unitary, jing2017tunable, hyland2017learning, vorontsov2017orthogonality, helfrich2020eigenvalue}.
To improve the efficiency of the optimization algorithms with orthogonality, many techniques have been utilized, e.g., householder reflections \cite{mhammedi2017efficient}, Cayley transform \cite{helfrich2018orthogonal, maduranga2019complex}, and exponential-map-based parameterization \cite{lezcano2019trivializations, lezcano2019cheap}.

Orthogonality has also shown with theoretical foundation its ability to help the training of general deep neural networks (DNNs) \cite{saxe2013exact, harandi2016generalized, liu2021orthogonal}, especially convolutional neural networks (CNNs) \cite{Wang_2020_CVPR, trockman2021orthogonalizing, liu2021convolutional}, by reducing the feature redundancy \cite{chen2017training}, stabilizing the distribution of activations over layers \cite{desjardins2015natural}, and so on.
Moreover, orthogonality has also been substantiated to be helpful in the training of generative adversarial networks (GANs) \cite{brock2016neural, miyato2018spectral, brock2018large, huang2020controllable}.
It is also notable that even merely the initialization with orthogonality is helpful \cite{mishkin2015all}.
One may expect to directly apply the existing methods on RNNs to general DNNs and CNNs. However, the methods on RNNs are usually limited to square matrices of small size, which makes it difficult to apply them directly to general DNNs and CNNs with numerous sizable parameters, especially when the computational resources are limited. Many researchers have been trying to propose practically applicable optimization algorithms on DNNs, especially CNNs, with orthogonality, which we focus on in this paper.

Naturally, as orthogonality can be seen as a property of the Stiefel manifold, a straightforward way is to directly utilize Riemannian optimization algorithms on the Stiefel manifold. However, although this field has continuous theoretical development, the direct application of Riemannian optimization requires computationally expensive retraction or projection to keep the parameters on the manifold \cite{ozay2018training}, which significantly increases the training time as demonstrated in a previous work \cite{li2020efficient} and can impair the stability of training \cite{harandi2016generalized, huang2018orthogonal, huang2020controllable}.

\noindent \textbf{Hard Constraints.} To address this problem without great loss of orthogonality, some previous methods, as some researchers have done on RNNs, use a parameter transformation or mapping from the Stiefel manifold to the Euclidean space, which still keeps hard constraints on orthogonality.
For example, a previous work \cite{huang2018orthogonal} finds a closed-form solution on the Stiefel manifold that minimizes the distance to the parameter on Euclidean space to be updated, where computationally expensive and numerically unstable eigendecomposition is required, which is also analyzed and mentioned in their later work \cite{huang2020controllable};
another work \cite{li2020efficient} proposes an iterative estimation of the retraction mapping based on the Cayley transform, which still needs considerable additional time as shown in the theoretical analyses and the practical experiments.

\noindent \textbf{Soft Constraints.} Alternatively and intuitively, one can use soft constraints by adding a penalty term in the loss function during the training to impose orthogonality with a low computational cost, which, nevertheless, means that the algorithms can hardly guarantee the maintenance of orthogonality, just like other regularization-based methods.
Many different kinds of penalty terms have been proposed, such as regularization based on the Frobenius norm and its variants \cite{rodriguez2016regularizing, xie2017all, bansal2018can}, regularization based the singular value \cite{jia2017improving, huang2020controllable}, and regularization based on doubly block Toeplitz matrices \cite{Wang_2020_CVPR}.

Unlike all the above existing methods, our method does not use any retraction, projection, transformation, or mapping but performs the optimization directly on the orthogonal parameters.
    
\section*{Preliminaries}

We use $\inp{\cdot, \cdot}$ to denote the Euclidean inner product of matrices, i.e., 
$\inp{A, B} = \operatorname{tr}(A^T B)$,
for any two matrices $A$ and $B$ of the same size, and we use $\norm{\cdot}$ to denote the norm induced from the above inner product.
We use $\Sym$ to denote the symmetrization operator defined by 
$\Sym(A) = \frac{1}{2}(A + A^T)$,
for any square matrix $A$.

The Stiefel manifold with parameters $n$ and $p$ consists of orthonormal $p$-frames in $\R^n$.
Formally, in this paper, we use the compact Stiefel manifold that is defined as
$\mathrm{St}(n,p) = \set{X \in \R^{n \times p}: X^T X = I_p},$
where $n \geq p$ and $I_p$ is the $p\times p$ identity matrix. Taking $\mathrm{St}(n,p)$ as an embedded submanifold of $\R^{n \times p}$ and using the Euclidean inner product, the tangent space of $\mathrm{St}(n,p)$ at $X\in \mathrm{St}(n,p)$ is expressed as 
$T_X \mathrm{St}(n,p) = \set{Z \in \R^{n \times p}: \Sym(X^T Z) = 0}$,
whose union over all points of the Stiefel manifold is the tangent bundle of the Stiefel manifold given by
$T \mathrm{St}(n,p) = \set{(X,Y): X \in \mathrm{St}(n,p), Y \in T_X \mathrm{St}(n,p)}$.
For any $M \in \R^{n \times p}$, we can project $M$ onto the tangent space $T_X \mathrm{St}(n,p)$ at $X$ by 
$f_{X}(M) = M - X\Sym(X^T M),$
thus the Riemannian gradient of a differentiable function $F$ is
$\operatorname{grad} F(X) 
= \nabla F(X) - X\Sym(X^T \nabla F(X)),$
where $\nabla F(\cdot)$ denotes the Euclidean gradient of $F$ as a function on $\R^{n \times p}$.

\section*{Feedback Gradient Descent}\label{sec:FGD}
Consider a generic optimization problem. Let $\mathcal{D}$ denote the training dataset, and let $\mathcal{L}(\cdot)$ denote the loss function that is nonnegative on $\Theta$, where $\Theta$ is the domain in which the parameters $\theta$ are optimized, and $\theta$ can be either a vector or a matrix. The objective is to find 
\begin{equation}\label{eq:theta_optim}
    \theta^* = \arg \min_{\theta \in \Theta} \mathcal{L}(\theta).
\end{equation}

As a well-known optimization method applicable to solve the problem~\eqref{eq:theta_optim} with $\Theta$ being Euclidean space without further restrictions, the discrete-time update of gradient descent \textit{with momentum}
\begin{align}\label{eq:dis_gd_mmt}
\theta  &\leftarrow \theta + \eta \phi \nonumber \\
\phi    &\leftarrow (1 - \gamma) \phi - \nabla \mathcal{L}(\theta)
\end{align}
corresponds to the following CTDS through the semi-implicit Euler method with step-size $\eta$:
\begin{equation}\label{eq:cont_gd_mmt}
\begin{bmatrix}
\dot \theta \\ \dot \phi
\end{bmatrix}
=
\begin{bmatrix}
\phi\\
(-\gamma \phi - \nabla \mathcal{L}(\theta))/\eta
\end{bmatrix},
\end{equation}
where $\phi$ represents the changing rate of $\theta$ and $\gamma > 0$ is the momentum coefficient.

However, the above systems \eqref{eq:dis_gd_mmt} and \eqref{eq:cont_gd_mmt} are only for Euclidean space and thus cannot be directly applied to the case when $\Theta$ is the Stiefel manifold. To design a discrete-time counterpart of~\eqref{eq:dis_gd_mmt} with orthogonality, we start from the construction of a CTDS defined on the tangent bundle of the Stiefel manifold that is analogous to~\eqref{eq:cont_gd_mmt} and represents the gradient descent process with a momentum-like term.

Specifically, we have the following CTDS:
\begin{equation}\label{eq:ds_original}
\begin{bmatrix}
\dot \theta \\ \dot \phi
\end{bmatrix}
=
\begin{bmatrix}
\phi\\
-\theta \phi^T \phi + (D - \theta \Sym(\theta^T D))/\eta
\end{bmatrix}
\end{equation}
with $\theta, \phi \in \R^{n \times p}$, where $\eta > 0$ will be later used in the discrete-time algorithm as the step-size and 
\begin{equation}\label{eq:def_D}
D = D(\theta, \phi) \coloneqq -\gamma \phi - \nabla \mathcal{L}(\theta).
\end{equation}
The following lemma shows that the dynamical system \eqref{eq:ds_original} is indeed well defined on $T \mathrm{St}(n,p)$. 
\begin{lemma}\label{lem:cont_ds_on_st}
For the dynamical system \eqref{eq:ds_original} defined on $\mathbb R^{n\times p} \times \mathbb R^{n\times p}$, if $\theta(0)^T \theta(0) = I$ and $\Sym(\theta(0)^T \phi(0)) = 0$, then $\theta(t)^T \theta(t) = I$ and $\Sym(\theta(t)^T \phi(t)) = 0$ hold for all $t \geq 0$.
\end{lemma}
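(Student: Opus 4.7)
The plan is to show that the two constraint functions
\[
A(t) \coloneqq \theta(t)^T\theta(t) - I, \qquad S(t) \coloneqq \Sym(\theta(t)^T\phi(t))
\]
satisfy a homogeneous linear ODE system with time-varying coefficients, so that the zero initial condition $(A(0),S(0))=(0,0)$ forces $(A(t),S(t))\equiv(0,0)$ by uniqueness of solutions of ODEs (the loss $\mathcal L$ being assumed smooth enough for local existence and uniqueness).

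First I would differentiate $A$: a direct computation using $\dot\theta=\phi$ gives
\[
\dot A = \dot\theta^T\theta + \theta^T\dot\theta = \phi^T\theta + \theta^T\phi = 2S,
\]
so $\dot A$ already vanishes whenever $S$ vanishes. Next I would differentiate $S$: substituting $\dot\theta=\phi$ and $\dot\phi = -\theta\phi^T\phi + (D - \theta\Sym(\theta^T D))/\eta$ into $\Sym(\phi^T\theta + \theta^T\dot\phi)$ and rewriting $\theta^T\theta = I + A$, the $\phi^T\phi$ terms cancel on the manifold and the $\Sym$ applied to $\theta^T D - \Sym(\theta^T D)$ cancels identically, leaving
\[
\dot S = -\Sym(A\phi^T\phi) - \tfrac{1}{\eta}\Sym\!\bigl(A\,\Sym(\theta^T D)\bigr).
\]
The key observation is that every term on the right of $\dot A$ and $\dot S$ is linear in $(A,S)$, with coefficients depending smoothly on $(\theta,\phi)$; in particular $(A,S)\equiv(0,0)$ is a solution of this linear subsystem.

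Therefore, given the initial condition $A(0)=0$ and $S(0)=0$, uniqueness of solutions for the ODE satisfied by $(A,S)$ forces $A(t)=0$ and $S(t)=0$ for all $t$ in the maximal interval of existence; since these constraints keep $\theta(t)$ bounded (as $\theta^T\theta=I$), the solution extends for all $t\geq 0$. Equivalently, one may appeal to Nagumo's invariance theorem after verifying that the vector field in \eqref{eq:ds_original} is tangent to $T\mathrm{St}(n,p)$ at points of $T\mathrm{St}(n,p)$, which the two identities $\dot A|_{A=S=0}=0$ and $\dot S|_{A=S=0}=0$ exactly express.

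The main obstacle is the algebraic cancellation in the computation of $\dot S$: one must track carefully how $\theta^T\theta=I+A$ interacts with both the momentum-like term $-\theta\phi^T\phi$ and the projected gradient $D-\theta\Sym(\theta^T D)$, and verify that all terms independent of $A$ cancel. This amounts to checking that the Euclidean vector field has been designed precisely so that its value at any $(\theta,\phi)$ satisfying $\theta^T\theta=I$ and $\Sym(\theta^T\phi)=0$ lies in the tangent space to $T\mathrm{St}(n,p)$; once this is verified the rest is a standard invariance argument.
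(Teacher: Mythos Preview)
Your argument is correct and is the standard invariance proof one would expect in the paper's (supplementary) proof: the identities $\dot A=2S$ and $\dot S=-\Sym(A\phi^T\phi)-\eta^{-1}\Sym\bigl(A\,\Sym(\theta^T D)\bigr)$ check out exactly as you wrote them, and the conclusion via uniqueness of the zero solution to the resulting linear homogeneous ODE in $(A,S)$ is the natural one.

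The only loose end is the global-existence remark at the end: $\theta^T\theta=I$ bounds $\theta$ but says nothing about $\phi$, so boundedness of $\theta$ alone does not rule out finite-time blowup of the trajectory. To get existence for all $t\ge 0$ you also need that the energy $\tfrac{\eta}{2}\norm{\phi}^2+\mathcal L(\theta)$ is nonincreasing along trajectories on $T\mathrm{St}(n,p)$ (a separate one-line computation, essentially the Lyapunov step underlying Lemma~\ref*{lem:converge_to_min}), or else you should state the invariance conclusion as holding on the maximal interval of existence.
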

\begin{proof}
Refer to the supplementary material for the proofs that are not given in the main text.
\end{proof}
    
The following lemma shows the asymptotic stability on $T \mathrm{St}(n,p)$ of the dynamical system~\eqref{eq:ds_original}.
\begin{lemma}\label{lem:converge_to_min}
    Assume that $\theta^* = \arg \min_{\theta \in \mathrm{St}(n,p)} \mathcal{L}(\theta)$ uniquely exists, and let $c_0 \geq 0$ such that $(\theta^*, 0)$ is the only point in $\set{(\theta, 0) \in \Omega: \operatorname{grad}_\theta \mathcal{L} (\theta) = 0}$, where $\Omega = \set{(\theta, \phi) \in T\mathrm{St}(n,p): \frac{\eta}{2} \norm{\phi}^2 + \mathcal{L}(\theta) \leq c_0}$. {Assume that $\mathcal{L}$ is $C^2$ on $\set{\theta \in \mathrm{St}(n,p): \mathcal{L}(\theta) \leq c_0}$.} Then each trajectory of \eqref{eq:ds_original} starting in $\Omega$ stays in $\Omega$ for all forward time and asymptotically converges to $(\theta^*, 0)$ as time tends to infinity.
\end{lemma}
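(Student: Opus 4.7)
The plan is to apply \emph{LaSalle's invariance principle} to the total mechanical energy
\[
V(\theta,\phi) \coloneqq \frac{\eta}{2}\norm{\phi}^2 + \mathcal{L}(\theta),
\]
whose $c_0$-sublevel set on $T\mathrm{St}(n,p)$ is exactly $\Omega$. First I would compute $\dot V$ along any trajectory of~\eqref{eq:ds_original} with initial condition on $T\mathrm{St}(n,p)$. By Lemma~\ref{lem:cont_ds_on_st}, $\theta^T\phi$ stays skew-symmetric throughout the trajectory, while $\phi^T\phi$ and $\Sym(\theta^T D)$ are symmetric. Since $\inp{A,B}=\operatorname{tr}(A^T B)$ vanishes for any same-size pair with one factor symmetric and the other skew-symmetric, the two cross terms $\inp{\phi,\theta\phi^T\phi}$ and $\inp{\phi,\theta\,\Sym(\theta^T D)}$ drop out, reducing $\dot V$ to
\[
\dot V = \inp{\phi, D} + \inp{\nabla\mathcal{L}(\theta),\phi} = -\gamma\norm{\phi}^2 \leq 0.
\]

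From $\dot V \leq 0$ I would immediately conclude forward invariance of $\Omega$, which gives the first half of the claim. Next I would check that $\Omega$ is compact: $\mathrm{St}(n,p)$ is compact and $\mathcal{L}$ is continuous, so the $\theta$-component is confined to a compact set, and nonnegativity of $\mathcal{L}$ together with $V \leq c_0$ yields $\norm{\phi}^2 \leq 2c_0/\eta$. Combined with the $C^2$ hypothesis on $\mathcal{L}$, this guarantees that trajectories exist for all forward time, the setting needed for LaSalle.

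The last step is LaSalle: the $\omega$-limit set of every trajectory in $\Omega$ is contained in the largest forward-invariant subset $M$ of $\set{(\theta,\phi)\in\Omega : \dot V = 0}=\set{(\theta,\phi)\in\Omega : \phi = 0}$. On $M$, $\phi\equiv 0$ forces $\dot\phi\equiv 0$; substituting $\phi=0$ into the second row of~\eqref{eq:ds_original} and recognising the Riemannian-gradient identity from the Preliminaries yields $0 = \dot\phi = -\operatorname{grad}\mathcal{L}(\theta)/\eta$, so $\operatorname{grad}\mathcal{L}(\theta) = 0$. By the uniqueness hypothesis on $c_0$, the only such point in $\Omega$ is $(\theta^*,0)$; hence $M = \set{(\theta^*,0)}$ and every trajectory in $\Omega$ converges to $(\theta^*,0)$.

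The main obstacle I anticipate is the $\dot V$ computation itself: the whole argument rests on the two cancellations that exploit the tangent-bundle constraint $\Sym(\theta^T\phi)=0$, i.e., the skew-symmetry of $\theta^T\phi$ played against the symmetry of $\phi^T\phi$ and of $\Sym(\theta^T D)$. Once these are handled and $\dot V=-\gamma\norm{\phi}^2$ is established, the remaining pieces (forward invariance, compactness of $\Omega$, and LaSalle) are standard.
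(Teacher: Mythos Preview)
Your proposal is correct and follows the approach the paper evidently intends: the set $\Omega$ is defined precisely as the $c_0$-sublevel set of the energy $V(\theta,\phi)=\frac{\eta}{2}\norm{\phi}^2+\mathcal{L}(\theta)$, and the paper defers the proof to the supplementary material, where the standard LaSalle argument you outline is the natural route. Your handling of the two cancellations via the skew-symmetry of $\theta^T\phi$ on $T\mathrm{St}(n,p)$ (guaranteed by Lemma~\ref{lem:cont_ds_on_st}) against the symmetry of $\phi^T\phi$ and $\Sym(\theta^T D)$, the resulting identity $\dot V=-\gamma\norm{\phi}^2$, the compactness of $\Omega$, and the identification of the largest invariant set in $\{\phi=0\}$ with $\{(\theta^*,0)\}$ are all sound.
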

Refer to the supplementary material for a local version of Lemma~\ref*{lem:converge_to_min}.

It now remains to discretize the system~\eqref{eq:ds_original}. However, we cannot just use usual discretization techniques on Euclidean space since the trajectories of the discretized system may go off the manifold, and the usual structure-preserving discretization methods can be complicated and computationally expensive.
To address these problems, we propose to extend system~\eqref{eq:ds_original} to an ambient Euclidean space and modify it by adding a feedback term so that the original domain becomes locally attractive.
After doing so, we can apply any off-the-shelf Euclidean discretization method to get our final optimization algorithm with the stability carried over to the discretized system.

Let $W = S \times \R^{n \times p}$, where
\begin{equation}
S = \set{\theta \in \R^{n \times p}: \norm{\theta^T \theta - I} < 1}
\end{equation}
is an open neighborhood of $\mathrm{St}(n,p)$ such that $\theta^T \theta$ is invertible for all $\theta \in S$. 
Consider the following dynamical system defined on $W$:
\begin{align}\label{eq:ds_ext_fb}
\begin{bmatrix}
\dot \theta \\ \dot \phi
\end{bmatrix}
=
X (\theta, \phi)
- \frac{\alpha}{4}
\begin{bmatrix}
\theta(I - (\theta^T \theta)^{-1})\\
\theta(\theta^T \theta)^{-1} (\phi^T \theta (\theta^T \theta)^{-1} + \theta^T \phi)
\end{bmatrix},
\end{align}
where 
\begin{align*}
X (\theta, \phi) & =
\begin{bmatrix}
X_\theta (\theta, \phi) \\
X_\phi (\theta, \phi)
\end{bmatrix} \nonumber \\
& =
\begin{bmatrix}
\phi - \theta(\theta^T \theta)^{-1} \Sym(\theta^T \phi)\\
\theta(\theta^T \theta)^{-1} ((\theta^T \theta)^{-1} \theta^T \phi \Sym(\theta^T \phi) -\phi^T\phi) +\\ (D - \theta(\theta^T \theta)^{-1} \Sym(\theta^T D))/\eta
\end{bmatrix}
\end{align*}
with $D$ defined in \eqref{eq:def_D}, and $\alpha > 0$ is the feedback coefficient. Note that \eqref{eq:ds_ext_fb} is identical to \eqref{eq:ds_original} on $T \mathrm{St}(n,p)$.  
Let $V: \R^{n \times p} \times \R^{n \times p} \rightarrow \R_{\geq 0}$ be a function defined as
\begin{equation}\label{eq:V_func}
V(\theta, \phi) = \frac{k_1}{4}\norm{\theta^T \theta - I}^2 + \frac{k_2}{2}\norm{\Sym(\theta^T \phi)}^2,
\end{equation}
where $k_1, k_2 > 0$. We have $V^{-1}(0) = T \mathrm{St}(n,p)$
and
\begin{align}
\nabla V (\theta, \phi) &=
\begin{bmatrix}
\nabla_\theta V (\theta, \phi) \\
\nabla_\phi V (\theta, \phi)
\end{bmatrix} \nonumber \\ 
&=
\begin{bmatrix}
k_1\theta(\theta^T \theta - I) + k_2\phi \Sym(\theta^T \phi) \\
k_2\theta \Sym(\theta^T \phi)
\end{bmatrix}.  
\end{align}
We have the following theorem showing that the tangent bundle of the Stiefel manifold is a locally attractive submanifold of the dynamical system~\eqref{eq:ds_ext_fb} on $W$ with exponential stability.
\begin{theorem}\label{thm:main_thm}
For each $0 < c < k_1/4$, $V^{-1}([0,c])$ is a subset of $W$ and each trajectory of \eqref{eq:ds_ext_fb} starting in $V^{-1}([0,c]) \subset W$ stays in $V^{-1}([0,c])$ for all forward time and exponentially converges to $V^{-1}(0) = T\mathrm{St}(n,p)$ as time tends to infinity.
\end{theorem}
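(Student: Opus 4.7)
My plan is to use the function $V$ itself as a Lyapunov function and to establish the sharp identity $\dot V = -\alpha V$ along every trajectory of \eqref{eq:ds_ext_fb} in $W$. From this, both forward invariance of the sublevel set $V^{-1}([0,c])$ and exponential convergence of $V$, and hence of the trajectory, to the zero set $V^{-1}(0) = T\mathrm{St}(n,p)$ follow immediately from Gr\"onwall's inequality. The containment $V^{-1}([0,c]) \subset W$ is essentially definitional: $V(\theta,\phi) \leq c < k_1/4$ forces $\norm{\theta^T\theta - I}^2 \leq 4c/k_1 < 1$, so $\theta \in S$ and hence $(\theta,\phi) \in W$.

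\textbf{The Lyapunov computation.} Write the right-hand side of \eqref{eq:ds_ext_fb} as $X(\theta,\phi) + F(\theta,\phi)$, where $F$ denotes the feedback term, so that $\dot V = \inp{\nabla V, X} + \inp{\nabla V, F}$. The key claims I would aim to prove are the two pointwise identities $\inp{\nabla V, X} \equiv 0$ and $\inp{\nabla V, F} \equiv -\alpha V$ on all of $W$ (not merely asymptotically near the manifold). Both reduce to trace computations: expand using the given formulas for $\nabla_\theta V$ and $\nabla_\phi V$, and simplify using (a)~the cyclic property of the trace, (b)~the symmetry of $\theta^T\theta - I$, $\Sym(\theta^T\phi)$, and $(\theta^T\theta)^{-1}$, and (c)~the algebraic identity $\theta^T\theta\cdot(\theta^T\theta)^{-1} = I$, which collapses several complicated-looking expressions. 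For $\inp{\nabla V, X} \equiv 0$, the two $D$-dependent pieces in $\inp{\nabla_\phi V, X_\phi}$ cancel after this collapse, and a term of the form $k_2\operatorname{tr}(\Sym(\theta^T\phi)\phi^T\theta(\theta^T\theta)^{-1}\Sym(\theta^T\phi))$ appears with opposite signs in the $\theta$- and $\phi$-contributions. For $\inp{\nabla V, F} \equiv -\alpha V$, the $\theta$-component of $F$ yields $k_1\norm{\theta^T\theta - I}^2$ plus a cross-term, while the $\phi$-component yields $2k_2\norm{\Sym(\theta^T\phi)}^2$ minus the same cross-term; the cancellation leaves exactly $4V$ inside the brackets, and the prefactor $-\alpha/4$ converts this to $-\alpha V$.

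\textbf{Conclusion and main obstacle.} With $\dot V = -\alpha V$ in hand, Gr\"onwall's inequality gives $V(t) = V(0)\,e^{-\alpha t}$ along any trajectory, so $V$ is strictly decreasing along the flow, the sublevel set $V^{-1}([0,c])$ is forward invariant, and $V\to 0$ exponentially, which is exactly the asserted exponential convergence to $V^{-1}(0)=T\mathrm{St}(n,p)$. The principal obstacle is the bookkeeping in the two pointwise identities: each inner product expands into several trace terms involving $\theta^T\theta - I$, $(\theta^T\theta)^{-1}$, and $\Sym(\theta^T\phi)$, and the cancellations must be tracked carefully. Importantly, no estimate or smallness argument is needed — every cancellation is forced exactly by symmetry and by $\theta^T\theta\cdot(\theta^T\theta)^{-1}=I$ — and this exactness is precisely what upgrades the asymptotic stability of the generic feedback-integrator framework to the exponential stability asserted here.
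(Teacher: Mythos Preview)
Your proposal is correct and follows essentially the same route as the paper: establish $\langle\nabla V, X\rangle = 0$ and then show the feedback term contributes exactly $-\alpha V$, yielding $\dot V = -\alpha V$ and hence $V(t)=e^{-\alpha t}V(0)$. The only cosmetic difference is that the paper packages the second identity by exhibiting a matrix $L(\theta,\phi)$ with $F=-\alpha L\nabla V$ and $\langle\nabla V, L\nabla V\rangle = V$, whereas you compute $\langle\nabla V, F\rangle=-\alpha V$ directly; the underlying trace manipulations are the same.
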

\begin{proof}
To proof the theorem, we make use of the following lemmas.
\begin{lemma}\label{lem:crit_pts}
For each $0 < c < k_1/4$, the set of all critical points of $V$ in $V^{-1}([0, c])$ is $V^{-1}(0)$.
\end{lemma}

\begin{lemma}\label{lem:inp_zero}
$\inp{\nabla V(\theta, \phi), X(\theta, \phi)} = 0, \forall (\theta, \phi) \in W$.
\end{lemma}

We define a function $L: W \rightarrow \R^{2n \times 2n}$ as
\begin{align}
L(\theta,\phi) 
= 
\begin{bmatrix}
\frac{1}{4k_1}\theta (\theta^T\theta)^{-2}\theta^T & L_1(\theta, \phi) \\
L_1^T(\theta, \phi) & L_2(\theta, \phi)
\end{bmatrix}
\end{align}
where $L_1(\theta, \phi) = -\frac{1}{4k_1}\theta(\theta^T \theta)^{-2} \theta^T \phi (\theta^T \theta)^{-1} \theta^T$ and $L_2(\theta, \phi) = \frac{1}{4k_1} \theta(\theta^T \theta)^{-1} \phi^T \theta (\theta^T \theta)^{-2} \theta^T \phi(\theta^T \theta)^{-1} \theta^T + \frac{1}{2k_2}\theta (\theta^T\theta)^{-2}\theta^T$.

\begin{lemma}\label{lem:nablaV_V}
    $\inp{\nabla V(\theta,\phi), L(\theta,\phi)\nabla V(\theta, \phi)} = V(\theta,\phi), \forall (\theta, \phi) \in W$. 
\end{lemma}

By direct computation, it is easy to check that \eqref{eq:ds_ext_fb} can be written as
\begin{align}
\begin{bmatrix}
\dot \theta \\ \dot \phi
\end{bmatrix}
&=
X (\theta, \phi) - \alpha L(\theta,\phi) \nabla V(\theta,\phi).
\end{align}
By Lemma \ref*{lem:inp_zero} and \ref*{lem:nablaV_V}, along each trajectory $(\theta (t), \phi(t))$ of \eqref{eq:ds_ext_fb} starting in $V^{-1}([0,c]) \subset W$, 
\begin{equation}
\frac{d}{dt} V = \inp{\nabla V, X -\alpha L \nabla V} = \inp{\nabla V, -\alpha L \nabla V} = -\alpha V,
\end{equation}

which implies that 
\begin{equation}
V(\theta(t), \phi(t)) = e^{-\alpha t} V(\theta(0), \phi(0))
\end{equation}

for all $t\geq 0$. Combined with Lemma~\ref*{lem:crit_pts}, this completes the proof.
\end{proof}

\begin{remark}
    Informally speaking, Lemma~\ref*{lem:inp_zero} says that the vector field $X$ does not change the value of $V$ and Lemma~\ref*{lem:nablaV_V} implies that the added term $-\alpha L \nabla V$ keeps decreasing the value of $V$. Therefore, the dynamical system~\eqref{eq:ds_ext_fb} consisting of these two parts can converge to the tangent bundle of the Stiefel manifold where $V = 0$ and dynamical system~\eqref{eq:ds_ext_fb} coincides with dynamical system~\eqref{eq:ds_original}. Moreover, this allows us to perform usual Euclidean discretization on system~\eqref{eq:ds_ext_fb} with the trajectories of the corresponding discretized dynamical system kept close to the tangent bundle of the Stiefel manifold.
\end{remark}

By straightforward discretization of the system~\eqref{eq:ds_ext_fb} with the semi-implicit Euler method with step size $\eta$, we have the discretized Algorithm~\ref*{alg:FGD} as the update rule for the parameters with orthogonality.

In the practical application of Algorithm~\ref*{alg:FGD}, we approximately use $2I - \theta^T\theta$ for $(\theta^T\theta)^{-1}$, which is a truncation of the Neumann series \cite{zhu2015matrix}.
This approximation does not cause great loss of precision because $\theta^T\theta$ indeed stays around the identity matrix and it can considerably reduce the computational complexity as the inversion operation is time-consuming, especially for large matrices.
\begin{theorem}
    With the approximation $(\theta^T\theta)^{-1} \approx 2I - \theta^T\theta$, the additional time complexity of Algorithm~\ref*{alg:FGD} is $O(p^2 n)$.
\end{theorem}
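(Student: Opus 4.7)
The plan is to walk through Algorithm~\ref*{alg:FGD} (the semi-implicit Euler discretization of~\eqref{eq:ds_ext_fb}) operation by operation and verify that, after substituting $2I - \theta^T\theta$ for $(\theta^T\theta)^{-1}$, every term added on top of a vanilla SGD step costs at most $O(p^2 n)$. Throughout, $\theta, \phi \in \R^{n\times p}$ with $n \geq p$, so $p^3 \leq p^2 n$.

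First, I would fix a deliberate evaluation order that never builds an intermediate $n\times n$ matrix. The reusable quantities are $A := \theta^T\theta \in \R^{p\times p}$ (cost $O(p^2 n)$), its approximate inverse $\tilde A := 2I - A$ (cost $O(p^2)$), $B := \theta^T\phi \in \R^{p\times p}$ (cost $O(p^2 n)$), $\Sigma := \Sym(B)$ (cost $O(p^2)$), $C := \phi^T\phi \in \R^{p\times p}$ (cost $O(p^2 n)$), and, once $D = -\gamma\phi - \nabla\mathcal{L}(\theta)$ is formed (cost $O(np)$, excluding the gradient itself which is not counted as additional), the projection quantity $E := \theta^T D \in \R^{p\times p}$ (cost $O(p^2 n)$). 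All of these are $O(p^2 n)$.

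Second, I would show that every remaining expression in $X_\theta$, $X_\phi$, and the feedback term is assembled by first multiplying $p\times p$ factors (cost $O(p^3) \leq O(p^2 n)$) and then left-multiplying by $\theta$ (cost $O(n p^2)$), never the other way around. For instance, $\theta \tilde A \Sigma$ is computed as $\theta(\tilde A \Sigma)$; the nested term $\theta\tilde A\bigl((\tilde A B)\Sigma - C\bigr)$ in $X_\phi$ and the feedback term $\theta\tilde A(\phi^T\theta \tilde A + B)$ are likewise reduced to a single $n\times p$ by $p\times p$ product after all $p\times p$ arithmetic is finished. The subtraction $\phi - \theta(\cdot)$ and the final update of $\theta$ and $\phi$ by $\eta\dot\theta, \eta\dot\phi$ are $O(np)$. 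Summing a constant number of $O(p^2 n)$ and cheaper contributions gives the claimed bound.

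The only subtle step is verifying that the Neumann truncation does not secretly reintroduce a matrix inverse or an $n\times n$ factor. I would state explicitly that in the algorithm $\tilde A$ replaces every occurrence of $(\theta^T\theta)^{-1}$, so no linear solve is ever performed, and then observe that each $(\theta^T\theta)^{-1}$ in~\eqref{eq:ds_ext_fb} is sandwiched between objects whose left factor eventually becomes $\theta$, so the scheduling above covers all of them. I expect this accounting, rather than any single estimate, to be the main obstacle, since the formulas in~\eqref{eq:ds_ext_fb} are nested and it is easy to inadvertently suggest an ordering like $\phi\phi^T$ or $\theta\theta^T$ which would cost $O(n^2 p)$; making the evaluation order explicit rules this out and completes the proof.
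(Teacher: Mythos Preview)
Your proposal is correct. The paper defers this proof to its supplementary material, but the argument there is exactly the operation-by-operation accounting you give: precompute the $p\times p$ quantities $\theta^T\theta$, $\theta^T\phi$, $\phi^T\phi$, $\theta^T D$ at cost $O(np^2)$ each, combine them via $p\times p$ products, and left-multiply by $\theta$ only at the end so that no $n\times n$ intermediate is ever formed.
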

FGD uses only matrix multiplications while the previous methods additionally use computationally expensive SVD, QR decomposition, etc.
In general, this approximation is inappropriate. For example, the Cayley transform also requires the computation of matrix inversion, but the inversion is not necessarily around the identity matrix.
It is notable that even if one directly uses inversion operation, it will inevitably have numerical errors when using any existing deep learning framework.
\begin{theorem}
    Assume there exists a constant $c > 0$ such that $\norm{\phi} \leq c$ for all timesteps when using Algorithm~\ref*{alg:FGD} with the approximation $(\theta^T\theta)^{-1} \approx 2I - \theta^T\theta$. For any given $0 < \epsilon < 1$, there exist $\eta = \eta(\epsilon) > 0$ and $\alpha = \alpha(\epsilon) > 0$ so that $\norm{\theta^T \theta - I} \leq \epsilon$ holds for all timesteps.
\end{theorem}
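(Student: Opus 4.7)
The plan is to use the Lyapunov function $V$ from \eqref{eq:V_func}, which satisfies $\norm{\theta^T\theta - I}^2 \leq (4/k_1) V(\theta,\phi)$, reducing the claim to showing $V(\theta_k,\phi_k) \leq k_1\epsilon^2/4$ for every step $k$. I would begin by writing one step of Algorithm~\ref*{alg:FGD} with the Neumann approximation as $(\theta_{k+1},\phi_{k+1}) = (\theta_k,\phi_k) + \eta\, \tilde G_{\eta,\alpha}(\theta_k,\phi_k)$, where $\tilde G_{\eta,\alpha} = X - \alpha L\nabla V + R$ with $R$ collecting exactly the terms in which $(\theta^T\theta)^{-1}$ has been replaced by $2I - \theta^T\theta$. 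Setting $A = \theta_k^T\theta_k$ and $E_k = A - I$, the identity $A^{-1} - (2I - A) = A^{-1}(I - A)^2 = A^{-1} E_k^2$ gives $\norm{R} \leq C_0(c,\alpha)\norm{E_k}^2 \leq (4C_0/k_1) V_k$, uniformly on the sublevel set $\{V \leq k_1\epsilon^2/4\} \subset W$, using $\norm{\phi_k} \leq c$ and $\norm{E_k} \leq \epsilon < 1$. The $1/\eta$ factor in $X_\phi$ multiplies only terms that are $O(1)$ after premultiplication by $\eta$, so $\norm{\eta \tilde G_{\eta,\alpha}}$ stays bounded by some constant $M(c,\alpha)$ independent of $\eta$ in any fixed range.

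Next, I Taylor-expand $V$ one step. By Lemmas~\ref*{lem:inp_zero} and~\ref*{lem:nablaV_V}, the first-order contribution is $-\alpha\eta V_k + \eta\inp{\nabla V_k, R}$. Since $\norm{\nabla V} \leq C_1 V^{1/2}$ on the relevant compact set, Cauchy--Schwarz gives $\eta\inp{\nabla V_k, R} \leq C_2 \eta V_k^{3/2}$. The quadratic Taylor remainder is bounded by $\tfrac12 \sup \norm{\nabla^2 V} \cdot \norm{\eta \tilde G_{\eta,\alpha}}^2 \leq C_3 \eta^2 M(c,\alpha)^2$, yielding the one-step recursion
\[
V_{k+1} \leq (1 - \alpha\eta)\, V_k + C_2 \eta V_k^{3/2} + C_3 \eta^2 M(c,\alpha)^2.
\]

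Finally, I choose parameters so that the set $\{V \leq k_1\epsilon^2/4\}$ is forward invariant. First, pick $\alpha > 0$ so that $C_2 (k_1\epsilon^2/4)^{1/2} \leq \alpha/2$, which makes the cubic error absorb into $-\alpha\eta V_k/2$ throughout this set. Then pick $\eta \in (0, 1/\alpha]$ small enough that $C_3 \eta M(c,\alpha)^2 \leq \alpha k_1 \epsilon^2/16$. With these choices, $V_k \leq k_1\epsilon^2/4$ implies $V_{k+1} \leq (1 - \alpha\eta/2) V_k + \alpha\eta k_1\epsilon^2/16 \leq k_1\epsilon^2/4$. Induction from the standard orthogonal initialization $V_0 = 0$ (or from any $V_0$ in this sublevel set) completes the proof, since $\norm{\theta_k^T\theta_k - I} \leq (4 V_k/k_1)^{1/2} \leq \epsilon$ for all $k$.

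The main obstacle is ensuring that the constants $M(c,\alpha)$, $C_1$, and $C_3$ are genuinely uniform along the trajectory: the Taylor remainder involves $\norm{\nabla^2 V}$, which grows polynomially in $\norm{\theta_k}$ and $\norm{\phi_k}$, and the residual $R$ blows up as $\norm{E_k} \to 1$ through the factor $A^{-1}$. The standing hypothesis $\norm{\phi}\leq c$ together with the invariance $\norm{E_k} \leq \epsilon < 1$ confines us to a compact subset of $W$ where all these bounds hold. A secondary subtlety is that $\alpha$ enters quadratically in the Taylor remainder via $L\nabla V$ (through $M(c,\alpha)$), so $\alpha$ cannot be pushed arbitrarily large; this forces the two-stage selection above, with $\alpha$ chosen first (just large enough to dominate the cubic approximation error) and then $\eta$ chosen small enough relative to $\alpha$ to kill the discretization remainder.
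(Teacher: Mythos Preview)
There is a genuine gap in your discretization estimate. You define $M(c,\alpha)$ as a bound on $\norm{\eta\,\tilde G_{\eta,\alpha}}$, correctly noting that the $1/\eta$ factor in $X_\phi$ makes $\norm{\eta\tilde G}$ of order one rather than $O(\eta)$. But then you write the quadratic Taylor remainder as $\tfrac12\sup\norm{\nabla^2 V}\cdot\norm{\eta\tilde G}^2 \leq C_3\,\eta^2 M^2$, which is inconsistent: with your definition the remainder is $\leq C_3 M^2$, a constant that does \emph{not} vanish as $\eta\to 0$. The same issue hides in the bound $\norm{R}\leq C_0(c,\alpha)\norm{E_k}^2$: the component of $R$ coming from the term $(D-\theta(\theta^T\theta)^{-1}\Sym(\theta^T D))/\eta$ in $X_\phi$ carries a $1/\eta$, so $C_0$ must scale like $1/\eta$ and your first-order cross term is really $C_2 V_k^{3/2}$, not $C_2\eta V_k^{3/2}$. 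Once these are corrected, your forward-invariance inequality becomes
\[
V_{k+1}\ \leq\ (1-\alpha\eta)V_k + C_2 V_k^{3/2} + C_3 M^2,
\]
and for small $\epsilon$ the last term ($\approx C_3 M_1^2>0$ with $M_1$ the $O(1)$ part of $\norm{\eta\tilde G_\phi}$) forces $\alpha\eta \gtrsim \epsilon^{-2}$, which is incompatible with $1-\alpha\eta>0$. The two-stage selection you describe cannot close this loop.

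The difficulty is structural: your Lyapunov function $V$ contains the $\norm{\Sym(\theta^T\phi)}^2$ term, and the $\phi$-update in Algorithm~\ref{alg:FGD} has an $O(1)$ step (not $O(\eta)$), so a naive Taylor expansion of $V$ cannot produce an $O(\eta^2)$ remainder. The cleaner route---and the one suggested by the theorem's conclusion, which concerns only $\norm{\theta^T\theta-I}$---is to drop $V$ and track $E_k=\theta_k^T\theta_k-I$ directly. The $\theta$-update has genuine step size $O(\eta)$ (no $1/\eta$ in $X_\theta$), and a short computation gives
\[
E_{k+1}=\Bigl(I-\tfrac{\eta\alpha}{2}A_k\Bigr)E_k + 2\eta\,E_k^2\,\Sym(\theta_k^T\phi_k) + \eta^2\,\Delta\theta_k^T\Delta\theta_k,
\]
using exactly your identity $I-A(2I-A)=E^2$. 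Now the remainder really is $O(\eta^2)$, the hypothesis $\norm{\phi}\leq c$ controls the middle term, and choosing first $\alpha$ large enough to dominate the $E_k^2$ term and then $\eta$ small works without circularity.
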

The rationality of this approximation is also bolstered by the experimental results in Section~\ref*{sec:exp} in terms of accuracy, complexity, and numerical stability.

\begin{algorithm}[t!]
\caption{Feedback Gradient Descent}
\label{alg:FGD}
\begin{algorithmic}
\STATE {\bfseries Input:} learning rate $\eta$, loss function $\mathcal L$, momentum coefficient $\gamma$, feedback coefficient $\alpha$
\STATE Initialize $\theta$ and $\phi$ such that $\theta^T \theta = I$, and $\phi = \theta \Sym(\theta^T \nabla \mathcal L(\theta)) - \nabla \mathcal L(\theta)$
\WHILE {training}
\STATE {$\theta \leftarrow \theta + \eta [X_\theta(\theta, \phi) - \frac{\alpha}{4} \theta(I - (\theta^T \theta)^{-1})]$}
\STATE {$\phi \leftarrow \phi + \eta [X_\phi(\theta, \phi) - \frac{\alpha}{4} \theta(\theta^T \theta)^{-1} (\phi^T \theta (\theta^T \theta)^{-1} + \theta^T \phi)]$}
\ENDWHILE
\end{algorithmic}
\end{algorithm}

\begin{figure*}[tb]
    \centering
    \begin{subfigure}{0.45\textwidth}
        \centering
        \includegraphics[width=0.9\columnwidth]{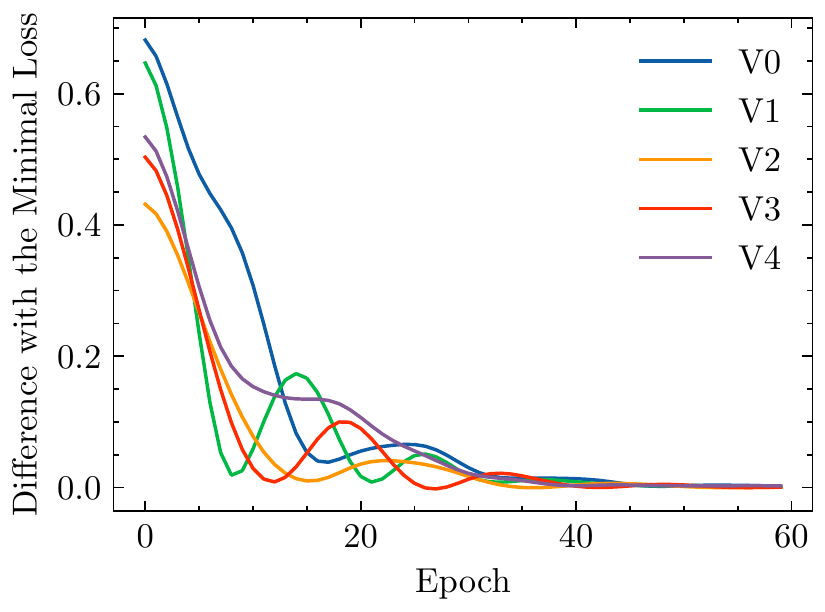}
        \caption{Difference with the minimal possible distance.}
        \label{toy_loss}
    \end{subfigure}%
    \begin{subfigure}{0.45\textwidth}
        \centering
        \includegraphics[width=0.9\columnwidth]{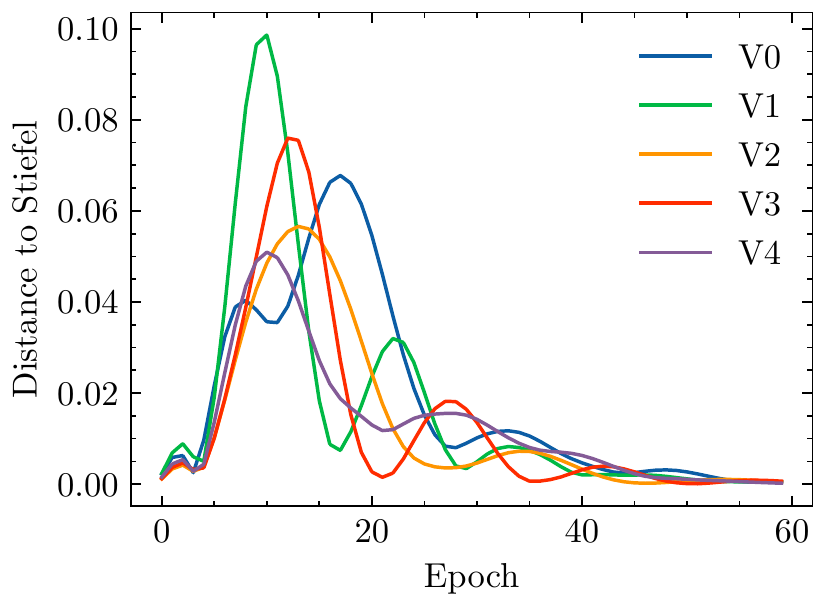}
        \caption{Distance to the Stiefel manifold.}
        \label{toy_dist}
    \end{subfigure}
    \caption{The results of the toy example.}
\end{figure*}

\section*{Experiments}\label{sec:exp}
We conduct comprehensive experiments using various models such as WideResNet, ResNet, VGG, and PreActResNet, on CIFAR-10/100, SVHN, and ImageNet datasets. We follow the official settings for the training-testing split of the datasets, the pre-processing, the data augmentation, etc.

To evaluate the performance, we compare our method with stochastic gradient descent (SGD) \textit{with momentum} and several state-of-the-art optimization algorithms imposing orthogonality such as OCNN \cite{Wang_2020_CVPR}, ONI \cite{huang2020controllable}, OMDSM \cite{huang2018orthogonal}, Cayley SGD/ADAM \cite{li2020efficient}, SRIP \cite{bansal2018can}, SVB \cite{jia2017improving}, and LCD \cite{rodriguez2016regularizing}.

When we list the experimental results, those without $*$ are the results claimed in the original papers and those with $*$ are obtained by running the corresponding open-source code on our machines.

All the experiments are on the image classification task and all the models used are convolutional neural networks (CNNs). In CNNs, the weight parameter of a convolutional layer has a size of $c_o \times c_i \times K_1 \times K_2$, where $c_o$ and $c_i$ are the channel numbers of output and input, respectively, and $(K_1, K_2)$ is the size of the kernel.
To impose orthogonality with FGD, we first need to reshape each such parameter into a two-dimensional matrix. This can be simply done by flattening the last three dimensions and transposing it, which gives a matrix with a size of $p_i \times c_o$, where $p_i = c_i K_1 K_2$. We only impose orthogonality on parameters with $p_i \geq c_o$ and $\min(K_1, K_2) > 1$, which usually constitute most part of a CNN model.
For the specific settings for each sub-experiment, refer to the supplementary material for the details not mentioned in the main text.
Our implementation uses PyTorch \cite{NEURIPS2019_9015}.

\subsection{Toy Example}
We first try FGD on a toy example. Fix a non-singular $V \in \R^{n \times p}$ and consider the following optimization problem \cite{huang2018orthogonal} that finds the matrix on the Stiefel manifold with minimal distance to $V$:
$\min_{W \in \mathrm{St}(n,p)} f(W) = \norm{W - V}.$
The closed-form solution of the problem is
$W_0 (V) = V D \Lambda^{-1/2} D^T,$
where $\Lambda = \mathrm{diag}(\{\lambda_1, ..., \lambda_p\})$ is a diagonal matrix corresponding to the eigenvalues of $S = V^T V$ and $D$ is the matrix consisting of the corresponding eigenvectors. Let $f_0 = \norm{W_0 - V}$ be the minimum distance. We apply FGD on this problem with learning rate $\eta = 0.1$, momentum coefficient $\gamma = 0.1$, and feedback coefficient $\alpha = 12$. We pick $5$ different random $V \in \R^{5 \times 3}$. In Figures\footnote{We use SciencePlots \cite{SciencePlots} to draw the plots.}~\ref*{toy_loss} and \ref*{toy_dist}, for the variable $W$ at the current epoch, we show the difference $f(W) - f_0$ between the current $f$ value and the minimum, and the distance $\norm{W^T W - I}$ to the Stiefel manifold during the training of $60$ epochs. In each trial, FGD decreases the loss of the optimization problem and maintains the orthogonality at the same time, practically showing the correctness of FGD.

\subsection{WideResNet on CIFAR-10/100}\label{exp_cif_wres}
We use the WideResNet 28-10 model on CIFAR-10/100 datasets.
We provide the most detailed results for this sub-experiment since the results of most baseline methods are available, which allows us to do the most comprehensive comparisons.
All experiments using WideResNet 28-10 are done on one TITAN Xp GPU.

In Table~\ref*{res_wres_cif}, we list the test accuracy rates, the training time, and the distances to the Stiefel manifold of the transformed two-dimensional parameters in the final trained models of each method, where FGD shows outstanding performance in terms of accuracy, efficiency, and numerical stability.

\begin{figure*}[tb]
\centering
\begin{subfigure}{0.45\textwidth}
    \centering
    \includegraphics[width=0.9\columnwidth]{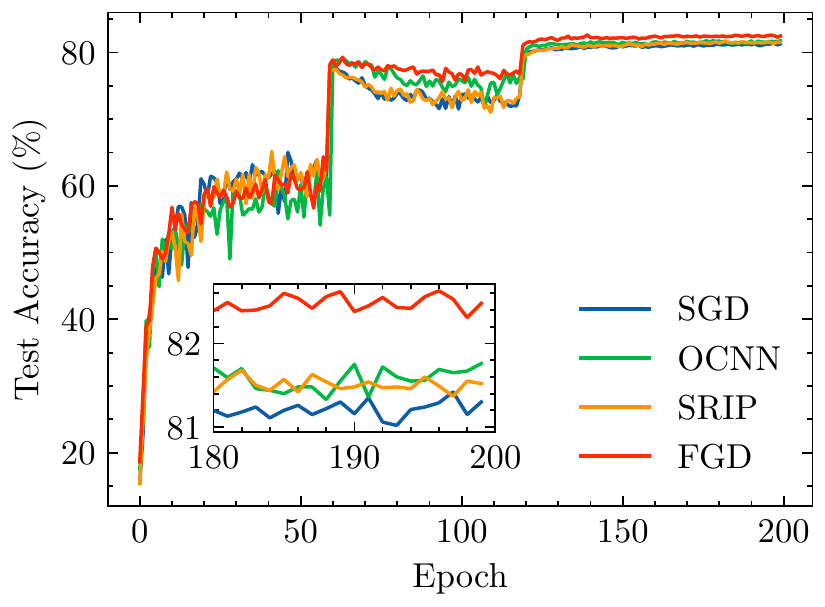}
    \caption{Test accuracy rates.}
    \label{fig_wres_cif100_test_acc}
\end{subfigure}%
\begin{subfigure}{0.45\textwidth}
    \centering
    \includegraphics[width=0.9\columnwidth]{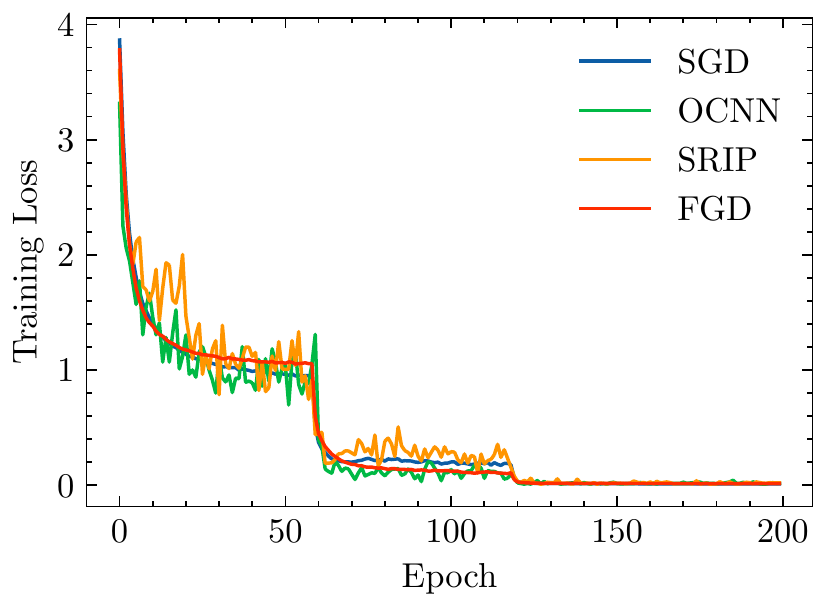}
    \caption{Training losses.}
    \label{fig_wres_cif100_train_loss}
\end{subfigure}
\caption{The results of the experiments using WideResNet 28-10 on CIFAR-100.}
\end{figure*}

\begin{table*}[tb]
\centering
\renewcommand{\tabcolsep}{10pt}
\begin{tabular}{lcccccc}
Method & \small{CIFAR-10} & \small{CIFAR-100} & \small{Training time} & $L_1$ & $L_2$ & $L_3$ \\
\midrule
SGD* & 96.27 $\pm$ 0.054 & 81.47 $\pm$ 0.282 & 110.16 & 12.26 & 15.80 & 23.98 \\
OCNN* & 96.34 $\pm$ 0.125 & 81.50 $\pm$ 0.156 & 144.28 & 3.00e-4 & 3.70 & 22.14 \\
ONI* & 96.03 $\pm$ 0.153 & 81.18 $\pm$ 0.107 & 132.94 & 12.26 & 17.08 & 22.42 \\    
\small{Cayley SGD} & 96.34 & 81.74 & 218.70\small{(292.73*)} & 4.70e-6* & 4.12e-6* & 7.32e-6* \\
\small{Cayley ADAM} & 96.43 & 81.90 & 224.40\small{(292.89*)} & 3.50e-6* & 4.48e-6* & 4.52e-6* \\
OMDSM & 96.27 & 81.39 & 312.45 & N/A & N/A & N/A \\
SRIP & 96.40 & 81.81 & 133.72* & 12.57* & 15.62* & 24.00* \\
SVB & 96.42 & 81.68 & N/A & N/A & N/A & N/A \\
LCD & 96.31 & 81.44 & N/A & N/A & N/A & N/A \\
FGD*(ours) & \textbf{96.51} $\pm$ 0.062 & \textbf{82.25} $\pm$ 0.220 & 136.71 & 2.12e-6 & 3.87e-6 & 6.56e-6 \\
\end{tabular}
\caption{Test accuracy rates (in percentages), training times per epoch (in seconds), and the distances to the Stiefel manifold of the transformed parameters using WideResNet 28-10 on CIFAR-10/100.}
\label{res_wres_cif}
\end{table*}

\noindent \textbf{Accuracy.}
We report the mean value and standard deviation of the test accuracy rates on both CIFAR-10 and CIFAR-100 datasets when they are available.
We list the claimed statistics in the original paper when we fail to reproduce similar ones.
As some statistics are not reported in the original papers and the corresponding code is unavailable, we leave them as N/A.
FGD achieves the highest accuracy rates on both CIFAR-10 and CIFAR-100 with accuracy gains of $0.24\%$ and $0.78\%$ over SGD, respectively.
Specifically, FGD achieves an average accuracy rate of $82.25\%$ and is the only method with an accuracy rate of more than $82\%$ on CIFAR-100.
The second best method is Cayley SGD showing an average accuracy rate of $81.90\%$. However, the gap with our method is favorably obvious.
Figures \ref*{fig_wres_cif100_test_acc} and \ref*{fig_wres_cif100_train_loss} show the test accuracy rates and training losses during the training process on CIFAR-100.
Specifically, in Figure~\ref*{fig_wres_cif100_test_acc}, we zoom in the accuracy rates in the last $20$ epochs, where the superiority of FGD over others in terms of accuracy is clearly shown. Besides, the ability of FGD to mitigate the drop in the test accuracy during epochs $60$ and $120$ is observed.
As shown in Figure~\ref*{fig_wres_cif100_train_loss}, the curve of FGD is much smoother than those of OCNN and SRIP, showing that FGD not only improves the accuracy rate but also stabilizes the training.
Note that SRIP and OCNN include regularization terms imposing orthogonality into the loss function, but in Figure~\ref*{fig_wres_cif100_train_loss} we only compare the loss of the optimization problem for the sake of clarity.

\noindent \textbf{Efficiency.}
We compare the efficiency by considering the training time for each method, as done in a previous work \cite{li2020efficient}.
The efficiency of FGD is comparable to those of ONI, SRIP, and OCNN.
All three methods are based on soft constraints and fail to keep orthogonality consistently.
The training time of FGD is only $2.2\%$-$2.8\%$ longer than that of ONI or SRIP.
Moreover, FGD is even $5.2\%$ faster than OCNN.
Compared with Cayley SGD/ADAM and OMDSM based on hard constraints, FGD is $37.5\%$-$56.2\%$ faster and achieves better numerical stability at the same time.

\noindent \textbf{Numerical stability.}
The distance to the Stiefel manifold of each method is used to compare the numerical stability.
We choose three representative layers, where the first one $L_1$ has original size $(160, 160, 3, 3)$ and transformed size $1440 \times 160$; the second one $L_2$ has original size $(320, 320, 3, 3)$ and transformed size $2880 \times 320$; and the third one $L_3$ has original size $(640, 640, 3, 3)$ and transformed size $5760 \times 640$. For a transformed matrix $M$ with size $n \times p$, the distance to the Stiefel manifold is computed as $\norm{{M}^T {M} - I_p}$.
FGD achieves the lowest distance to the Stiefel manifold on each representative layer, showing the highest numerical stability when imposing orthogonality.

\begin{table*}[tb]
\centering
\begin{tabular}{lccccc}
Method      & ResNet18          & ResNet34      & ResNet50      & ResNet101         & ResNet110 \\
\midrule
SGD*        & 95.20/77.85       & 95.30/78.29   & 95.46/80.29   & 95.37/79.93       & 94.36/74.79 \\
SRIP 
            & N/A               & N/A           & N/A           & N/A               & 93.45\small{(94.44*)}/74.99 \\
OCNN* 
            & 95.20*/78.10      & 95.41*/78.70  & 95.54*/80.33* & 95.45*/80.12*     & 94.89*/75.93* \\
FGD*(ours) &\textbf{95.57/78.87}& \textbf{95.52/79.52} & \textbf{95.83/80.64} & \textbf{95.68/80.17} & \textbf{95.20/76.33}   \\
\end{tabular}
\caption{Test accuracy rates using ResNet on CIFAR-10/100.}
\label{res_resnet_cif}
\end{table*}

\subsection{ResNet and VGG on CIFAR-10/100}\label{exp_cif_res}
We use ResNet and VGG models on CIFAR-10/100.
We use ResNet110 with bottleneck layers.
We use VGG models with batch normalization \cite{ioffe2015batch}.
For each of the models, orthogonality is imposed only on the convolutional layers in the last two residual modules.
The parameters of these layers constitute the majority of the whole network.
This restriction on the range of parameters to impose orthogonality shows the best performance.
We also apply the same restriction when using OCNN for fairness. This restriction also improves the performance of OCNN.

In Tables~\ref*{res_resnet_cif} and \ref*{res_vgg_cif}, we report the test accuracy rates when using ResNet and VGG, respectively.
We focus on the comparisons in terms of accuracy as the statistics on the training time and the distances to the Stiefel manifold of most methods are unavailable.
In each cell of both tables, the number on the left represents the accuracy on CIFAR-10 and that on the right is for CIFAR-100.
In Table~\ref*{res_resnet_cif}, we compare FGD with three methods: SGD, SRIP, and OCNN.
The statistics and code of SRIP are only available for ResNet110.
FGD consistently outperforms others in terms of accuracy. Notably, the accuracy gains of FGD over SGD are more than $1\%$ when using ResNet18, ResNet34, and ResNet110 on CIFAR-100.
In Table~\ref*{res_vgg_cif}, we compare FGD with three methods: SGD, Cayley SGD, and Cayley ADAM.
FGD consistently outperforms others in terms of accuracy with accuracy gains of $0.97\%$, $0.52\%$, and $1.38\%$ over SGD when using three different VGG models on CIFAR-100.

\begin{table}[tb]
\centering
\renewcommand{\tabcolsep}{3pt}
\begin{tabular}{lcccc}
Method      & VGG13     & VGG16     & VGG19 \\
\midrule
SGD*        & 93.92/74.27 & 93.81/74.03 & 93.73/72.99 \\
\small{Cayley SGD}
            & 94.10/75.14 & 94.23/74.52 & 94.15/74.32 \\
\small{Cayley ADAM}
            & 94.07/74.90 & 94.12/74.39 & 93.97/74.30 \\
FGD*(ours)  & \textbf{94.32/75.24} & \textbf{94.45/74.55} & \textbf{94.22/74.37} \\
\end{tabular}
\caption{Test accuracy rates using VGG on CIFAR-10/100.}
\label{res_vgg_cif}
\end{table}

\subsection{ResNet and PreActResNet on ImageNet}
To further show the performance of FGD, we conduct experiments using ResNet and PreActResNet models on the more complicated ImageNet ILSVRC-2012 dataset, where we compare FGD with four methods: SGD, OCNN, ONI, and SRIP.
In Table~\ref*{res_imagenet}, we report the test accuracy rates of different methods.
In each cell, the number on the left represents the top-1 accuracy and that on the right is the top-5 accuracy. As some statistics are not reported in the original paper and the corresponding code is unavailable, we leave them as N/A.
Among the five methods, FGD achieves the best performance in terms of both top-1 and top-5 accuracy rates using ResNet50 and PreActResNet34.
In the experiments using ResNet34, FGD achieves the highest top-1 accuracy, and the achieved top-5 accuracy is only lower than the claimed one of OCNN and higher than all the others including the reproduced one of OCNN.
The top-1 accuracy gains of FGD over SGD are $0.47\%$ on ResNet34, $0.67\%$ on ResNet50, and $0.91\%$ on PreActResNet34, while the top-5 accuracy gains are $0.46\%$, $0.35\%$, and $0.46\%$.

\begin{table}[tb]
\centering
\renewcommand{\tabcolsep}{3pt}
\begin{tabular}{lccc}
Method      & ResNet34      & ResNet50          & \small{PreActRes34}\\
\midrule
SGD*        & 73.49, 91.31  & 76.13, 92.98      & 72.62, 90.95\\
\multirow{2}{*}{OCNN}        & 73.93, \textbf{92.11} & \multirow{2}{*}{76.38, 93.18*}      & \multirow{2}{*}{N/A} \\
 & (73.81*, 91.63*)  \\
ONI*        & 73.68, 91.60   & 76.75, 93.28       & N/A\\
SRIP        & N/A, 91.68    & N/A, 93.13          & N/A, 91.21\\
FGD*(ours)  & \textbf{73.96}, 91.77     & \textbf{76.80, 93.33}    & \textbf{73.53, 91.41}\\
\end{tabular}
\caption{Test accuracy rates on ImageNet.}
\label{res_imagenet}
\end{table}





\section*{Conclusion and Discussion}\label{sec:conc_and_disc}
In this paper, we have proposed Feedback Gradient Descent (FGD), an efficient and stable optimization algorithm with orthogonality for DNNs.
Inspired by Feedback Integrators, we have constructed a continuous-time dynamical system in a Euclidean space containing the tangent bundle of the Stiefel manifold as a local attractor, and completed the discretization with the semi-implicit Euler method.
The excellent performance of FGD in terms of accuracy, efficiency, and numerical stability has been shown through the theoretical analyses and the extensive experiments.
It is hard to apply most existing momentum-based methods on manifolds to DNNs due to their problems on efficiency \cite{lezcano2020adaptive}.
Moreover, some recent works show that the existing momentum-based methods suffer from insufficiency \cite{kidambi2018insufficiency, liu2019accelerating}.
FGD provides a practical and efficient solution for generalizing momentum to manifolds.

FGD has been shown to be advantageous on the image classification task. Additionally, for many other tasks where orthogonality has manifested its ability, such as graph embedding \cite{robles2007riemannian, shaw2009structure, liu2017semi} and matrix factorization \cite{ding2006orthogonal, zhang2016efficient}, FGD can hopefully also be utilized.
Although we have designed FGD specifically for the Stiefel manifold, our framework can be flexibly applied to other submanifolds embedded in Euclidean space, such as the Oblique manifold \cite{huang2017projection}.
Some recent works \cite{xiao2018dynamical, qi2020deep} propose techniques to leverage isometry and orthogonality for training DNNs, without using normalization or skip connections.
Another recent work \cite{liu2021orthogonal} proposes an orthogonal over-parameterized training framework by learning an orthogonal transformation to help effectively train DNNs.
We leave it as future works to explore the potential of FGD in these directions.



\section*{Acknowledgements}
The authors appreciate the constructive and insightful comments from the reviewers.
The authors also would like to thank Lin Wang and Yi-Ling Qiao for discussions on the PyTorch implementation. 

This work was conducted by the Center for Applied Research in Artificial Intelligence (CARAI) grant funded by Defense Acquisition Program Administration (DAPA) and Agency for Defense Development (ADD) [UD190031RD].

\bibliography{ref}

\begin{thebibliography}{75}
\providecommand{\natexlab}[1]{#1}

\bibitem[{Absil, Mahony, and Sepulchre(2009)}]{absil2009optimization}
Absil, P.-A.; Mahony, R.; and Sepulchre, R. 2009.
\newblock \emph{Optimization algorithms on matrix manifolds}.
\newblock Princeton University Press.

\bibitem[{Absil and Malick(2012)}]{absil2012projection}
Absil, P.-A.; and Malick, J. 2012.
\newblock Projection-like retractions on matrix manifolds.
\newblock In \emph{SIOPT}.

\bibitem[{Arjovsky, Shah, and Bengio(2016)}]{arjovsky2016unitary}
Arjovsky, M.; Shah, A.; and Bengio, Y. 2016.
\newblock Unitary evolution recurrent neural networks.
\newblock In \emph{ICML}.

\bibitem[{Baldi(1995)}]{baldi1995gradient}
Baldi, P. 1995.
\newblock Gradient descent learning algorithm overview: A general dynamical
  systems perspective.
\newblock In \emph{IEEE Transactions on neural networks}.

\bibitem[{Bansal, Chen, and Wang(2018)}]{bansal2018can}
Bansal, N.; Chen, X.; and Wang, Z. 2018.
\newblock Can we gain more from orthogonality regularizations in training deep
  networks?
\newblock In \emph{NIPS}.

\bibitem[{Bengio, Simard, and Frasconi(1994)}]{bengio1994learning}
Bengio, Y.; Simard, P.; and Frasconi, P. 1994.
\newblock Learning long-term dependencies with gradient descent is difficult.
\newblock In \emph{IEEE Transactions on Neural Networks}.

\bibitem[{Bonnabel(2013)}]{bonnabel2013stochastic}
Bonnabel, S. 2013.
\newblock Stochastic gradient descent on Riemannian manifolds.
\newblock In \emph{IEEE Transactions on Automatic Control}.

\bibitem[{Brock, Donahue, and Simonyan(2018)}]{brock2018large}
Brock, A.; Donahue, J.; and Simonyan, K. 2018.
\newblock Large scale GAN training for high fidelity natural image synthesis.
\newblock In \emph{ICLR}.

\bibitem[{Brock et~al.(2017)Brock, Lim, Ritchie, and Weston}]{brock2016neural}
Brock, A.; Lim, T.; Ritchie, J.~M.; and Weston, N. 2017.
\newblock Neural photo editing with introspective adversarial networks.
\newblock In \emph{ICLR}.

\bibitem[{Chang(2018)}]{chang2018controller}
Chang, D.~E. 2018.
\newblock On controller design for systems on manifolds in Euclidean space.
\newblock In \emph{International Journal of Robust and Nonlinear Control}.

\bibitem[{Chang, Jim{\'e}nez, and Perlmutter(2016)}]{chang2016feedback}
Chang, D.~E.; Jim{\'e}nez, F.; and Perlmutter, M. 2016.
\newblock Feedback integrators.
\newblock In \emph{Journal of Nonlinear Science}.

\bibitem[{Chang and Perlmutter(2019)}]{chang2019feedback}
Chang, D.~E.; and Perlmutter, M. 2019.
\newblock Feedback integrators for nonholonomic mechanical systems.
\newblock In \emph{Journal of Nonlinear Science}.

\bibitem[{Chang, Phogat, and Choi(2019)}]{chang2019model}
Chang, D.~E.; Phogat, K.~S.; and Choi, J. 2019.
\newblock Model Predictive Tracking Control for Invariant Systems on Matrix Lie
  Groups via Stable Embedding into Euclidean Spaces.
\newblock In \emph{IEEE Transactions on Automatic Control}.

\bibitem[{Chen et~al.(2017)Chen, Jin, Feng, and Yan}]{chen2017training}
Chen, Y.; Jin, X.; Feng, J.; and Yan, S. 2017.
\newblock Training group orthogonal neural networks with privileged
  information.
\newblock In \emph{IJCAI}.

\bibitem[{Deng et~al.(2009)Deng, Dong, Socher, Li, Li, and
  Fei-Fei}]{deng2009imagenet}
Deng, J.; Dong, W.; Socher, R.; Li, L.-J.; Li, K.; and Fei-Fei, L. 2009.
\newblock Imagenet: A large-scale hierarchical image database.
\newblock In \emph{CVPR}.

\bibitem[{Desjardins et~al.(2015)Desjardins, Simonyan, Pascanu, and
  Kavukcuoglu}]{desjardins2015natural}
Desjardins, G.; Simonyan, K.; Pascanu, R.; and Kavukcuoglu, K. 2015.
\newblock Natural neural networks.
\newblock In \emph{NIPS}.

\bibitem[{Ding et~al.(2006)Ding, Li, Peng, and Park}]{ding2006orthogonal}
Ding, C.; Li, T.; Peng, W.; and Park, H. 2006.
\newblock Orthogonal nonnegative matrix t-factorizations for clustering.
\newblock In \emph{KDD}.

\bibitem[{Edelman, Arias, and Smith(1998)}]{edelman1998geometry}
Edelman, A.; Arias, T.~A.; and Smith, S.~T. 1998.
\newblock The geometry of algorithms with orthogonality constraints.
\newblock In \emph{SIMAX}.

\bibitem[{Garrett(2020)}]{SciencePlots}
Garrett, J.~D. 2020.
\newblock {SciencePlots (v1.0.6)}.
\newblock In \emph{Zenodo}.

\bibitem[{Haier, Lubich, and Wanner(2006)}]{haier2006geometric}
Haier, E.; Lubich, C.; and Wanner, G. 2006.
\newblock \emph{Geometric Numerical integration: structure-preserving
  algorithms for ordinary differential equations}.
\newblock Springer.

\bibitem[{Harandi and Fernando(2016)}]{harandi2016generalized}
Harandi, M.; and Fernando, B. 2016.
\newblock Generalized backpropagation, \'{E}tude de cas: Orthogonality.
\newblock In \emph{arXiv}.

\bibitem[{He et~al.(2016{\natexlab{a}})He, Zhang, Ren, and Sun}]{he2016deep}
He, K.; Zhang, X.; Ren, S.; and Sun, J. 2016{\natexlab{a}}.
\newblock Deep residual learning for image recognition.
\newblock In \emph{CVPR}.

\bibitem[{He et~al.(2016{\natexlab{b}})He, Zhang, Ren, and
  Sun}]{he2016identity}
He, K.; Zhang, X.; Ren, S.; and Sun, J. 2016{\natexlab{b}}.
\newblock Identity mappings in deep residual networks.
\newblock In \emph{ECCV}.

\bibitem[{Helfrich, Willmott, and Ye(2018)}]{helfrich2018orthogonal}
Helfrich, K.; Willmott, D.; and Ye, Q. 2018.
\newblock Orthogonal recurrent neural networks with scaled Cayley transform.
\newblock In \emph{ICML}.

\bibitem[{Helfrich and Ye(2020)}]{helfrich2020eigenvalue}
Helfrich, K.; and Ye, Q. 2020.
\newblock Eigenvalue Normalized Recurrent Neural Networks for Short Term
  Memory.
\newblock In \emph{AAAI}.

\bibitem[{Huang et~al.(2020)Huang, Liu, Zhu, Wan, Yuan, Li, and
  Shao}]{huang2020controllable}
Huang, L.; Liu, L.; Zhu, F.; Wan, D.; Yuan, Z.; Li, B.; and Shao, L. 2020.
\newblock Controllable orthogonalization in training dnns.
\newblock In \emph{CVPR}.

\bibitem[{Huang et~al.(2017)Huang, Liu, Lang, and Li}]{huang2017projection}
Huang, L.; Liu, X.; Lang, B.; and Li, B. 2017.
\newblock Projection based weight normalization for deep neural networks.
\newblock In \emph{arXiv}.

\bibitem[{Huang et~al.(2018)Huang, Liu, Lang, Yu, Wang, and
  Li}]{huang2018orthogonal}
Huang, L.; Liu, X.; Lang, B.; Yu, A.~W.; Wang, Y.; and Li, B. 2018.
\newblock Orthogonal weight normalization: Solution to optimization over
  multiple dependent stiefel manifolds in deep neural networks.
\newblock In \emph{AAAI}.

\bibitem[{Hyland and R{\"a}tsch(2017)}]{hyland2017learning}
Hyland, S.; and R{\"a}tsch, G. 2017.
\newblock Learning unitary operators with help from u (n).
\newblock In \emph{AAAI}.

\bibitem[{Ioffe and Szegedy(2015)}]{ioffe2015batch}
Ioffe, S.; and Szegedy, C. 2015.
\newblock Batch normalization: Accelerating deep network training by reducing
  internal covariate shift.
\newblock In \emph{ICML}.

\bibitem[{Jia et~al.(2017)Jia, Tao, Gao, and Xu}]{jia2017improving}
Jia, K.; Tao, D.; Gao, S.; and Xu, X. 2017.
\newblock Improving training of deep neural networks via singular value
  bounding.
\newblock In \emph{CVPR}.

\bibitem[{Jiang and Dai(2015)}]{jiang2015framework}
Jiang, B.; and Dai, Y.-H. 2015.
\newblock A framework of constraint preserving update schemes for optimization
  on Stiefel manifold.
\newblock In \emph{Mathematical Programming}.

\bibitem[{Jing et~al.(2017)Jing, Shen, Dubcek, Peurifoy, Skirlo, LeCun,
  Tegmark, and Solja{\v{c}}i{\'c}}]{jing2017tunable}
Jing, L.; Shen, Y.; Dubcek, T.; Peurifoy, J.; Skirlo, S.; LeCun, Y.; Tegmark,
  M.; and Solja{\v{c}}i{\'c}, M. 2017.
\newblock Tunable efficient unitary neural networks (eunn) and their
  application to rnns.
\newblock In \emph{ICML}.

\bibitem[{Kidambi et~al.(2018)Kidambi, Netrapalli, Jain, and
  Kakade}]{kidambi2018insufficiency}
Kidambi, R.; Netrapalli, P.; Jain, P.; and Kakade, S.~M. 2018.
\newblock On the insufficiency of existing momentum schemes for Stochastic
  Optimization.
\newblock In \emph{ICLR}.

\bibitem[{Kiefer, Wolfowitz et~al.(1952)}]{kiefer1952stochastic}
Kiefer, J.; Wolfowitz, J.; et~al. 1952.
\newblock Stochastic estimation of the maximum of a regression function.
\newblock In \emph{The Annals of Mathematical Statistics}.

\bibitem[{Ko et~al.(2021)Ko, Phogat, Petit, and Chang}]{ko2021tracking}
Ko, W.; Phogat, K.~S.; Petit, N.; and Chang, D.~E. 2021.
\newblock Tracking Controller Design for Satellite Attitude Under Unknown
  Constant Disturbance Using Stable Embedding.
\newblock In \emph{Journal of Electrical Engineering \& Technology}.

\bibitem[{Krizhevsky, Hinton et~al.(2009)}]{krizhevsky2009learning}
Krizhevsky, A.; Hinton, G.; et~al. 2009.
\newblock \emph{Learning multiple layers of features from tiny images}.
\newblock Master's thesis, University of Toronto.

\bibitem[{Le, Jaitly, and Hinton(2015)}]{le2015simple}
Le, Q.~V.; Jaitly, N.; and Hinton, G.~E. 2015.
\newblock A simple way to initialize recurrent networks of rectified linear
  units.
\newblock In \emph{arXiv:1504.00941}.

\bibitem[{Lezcano~Casado(2019)}]{lezcano2019trivializations}
Lezcano~Casado, M. 2019.
\newblock Trivializations for gradient-based optimization on manifolds.
\newblock In \emph{NIPS}.

\bibitem[{Lezcano-Casado(2020)}]{lezcano2020adaptive}
Lezcano-Casado, M. 2020.
\newblock Adaptive and Momentum Methods on Manifolds Through Trivializations.
\newblock In \emph{arXiv:2010.04617}.

\bibitem[{Lezcano-Casado and Mart{\i}nez-Rubio(2019)}]{lezcano2019cheap}
Lezcano-Casado, M.; and Mart{\i}nez-Rubio, D. 2019.
\newblock Cheap orthogonal constraints in neural networks: A simple
  parametrization of the orthogonal and unitary group.
\newblock In \emph{ICML}.

\bibitem[{Li, Fuxin, and Todorovic(2019)}]{li2020efficient}
Li, J.; Fuxin, L.; and Todorovic, S. 2019.
\newblock Efficient Riemannian optimization on the Stiefel manifold via the
  Cayley transform.
\newblock In \emph{ICLR}.

\bibitem[{Liu and Belkin(2019)}]{liu2019accelerating}
Liu, C.; and Belkin, M. 2019.
\newblock Accelerating SGD with momentum for over-parameterized learning.
\newblock In \emph{ICLR}.

\bibitem[{Liu, Han, and Nie(2017)}]{liu2017semi}
Liu, H.; Han, J.; and Nie, F. 2017.
\newblock Semi-supervised Orthogonal Graph Embedding with Recursive
  Projections.
\newblock In \emph{IJCAI}.

\bibitem[{Liu et~al.(2021{\natexlab{a}})Liu, Li, Zhai, You, Zhu,
  Fernandez-Granda, and Qu}]{liu2021convolutional}
Liu, S.; Li, X.; Zhai, Y.; You, C.; Zhu, Z.; Fernandez-Granda, C.; and Qu, Q.
  2021{\natexlab{a}}.
\newblock Convolutional normalization: Improving deep convolutional network
  robustness and training.
\newblock In \emph{NIPS}.

\bibitem[{Liu et~al.(2021{\natexlab{b}})Liu, Lin, Liu, Rehg, Paull, Xiong,
  Song, and Weller}]{liu2021orthogonal}
Liu, W.; Lin, R.; Liu, Z.; Rehg, J.~M.; Paull, L.; Xiong, L.; Song, L.; and
  Weller, A. 2021{\natexlab{b}}.
\newblock Orthogonal over-parameterized training.
\newblock In \emph{CVPR}.

\bibitem[{Maduranga, Helfrich, and Ye(2019)}]{maduranga2019complex}
Maduranga, K.~D.; Helfrich, K.~E.; and Ye, Q. 2019.
\newblock Complex unitary recurrent neural networks using scaled cayley
  transform.
\newblock In \emph{AAAI}.

\bibitem[{Mhammedi et~al.(2017)Mhammedi, Hellicar, Rahman, and
  Bailey}]{mhammedi2017efficient}
Mhammedi, Z.; Hellicar, A.; Rahman, A.; and Bailey, J. 2017.
\newblock Efficient orthogonal parametrisation of recurrent neural networks
  using householder reflections.
\newblock In \emph{ICML}.

\bibitem[{Mishkin and Matas(2016)}]{mishkin2015all}
Mishkin, D.; and Matas, J. 2016.
\newblock All you need is a good init.
\newblock In \emph{ICLR}.

\bibitem[{Miyato et~al.(2018)Miyato, Kataoka, Koyama, and
  Yoshida}]{miyato2018spectral}
Miyato, T.; Kataoka, T.; Koyama, M.; and Yoshida, Y. 2018.
\newblock Spectral normalization for generative adversarial networks.
\newblock In \emph{ICLR}.

\bibitem[{Netzer et~al.(2011)Netzer, Wang, Coates, Bissacco, Wu, and
  Ng}]{netzer2011reading}
Netzer, Y.; Wang, T.; Coates, A.; Bissacco, A.; Wu, B.; and Ng, A.~Y. 2011.
\newblock Reading digits in natural images with unsupervised feature learning.
\newblock In \emph{NIPS Workshop on Deep Learning and Unsupervised Feature
  Learning}.

\bibitem[{Ozay and Okatani(2018)}]{ozay2018training}
Ozay, M.; and Okatani, T. 2018.
\newblock Training cnns with normalized kernels.
\newblock In \emph{AAAI}.

\bibitem[{Park et~al.(2021)Park, Phogat, Kim, and Chang}]{park2021transversely}
Park, J.-H.; Phogat, K.~S.; Kim, W.; and Chang, D.~E. 2021.
\newblock Transversely Stable Extended Kalman Filters for Systems on Manifolds
  in Euclidean Spaces.
\newblock In \emph{Journal of Dynamic Systems, Measurement, and Control}.

\bibitem[{Pascanu, Mikolov, and Bengio(2013)}]{pascanu2013difficulty}
Pascanu, R.; Mikolov, T.; and Bengio, Y. 2013.
\newblock On the difficulty of training recurrent neural networks.
\newblock In \emph{ICML}.

\bibitem[{Paszke et~al.(2019)Paszke, Gross, Massa, Lerer, Bradbury, Chanan,
  Killeen, Lin, Gimelshein, Antiga, Desmaison, Kopf, Yang, DeVito, Raison,
  Tejani, Chilamkurthy, Steiner, Fang, Bai, and Chintala}]{NEURIPS2019_9015}
Paszke, A.; Gross, S.; Massa, F.; Lerer, A.; Bradbury, J.; Chanan, G.; Killeen,
  T.; Lin, Z.; Gimelshein, N.; Antiga, L.; Desmaison, A.; Kopf, A.; Yang, E.;
  DeVito, Z.; Raison, M.; Tejani, A.; Chilamkurthy, S.; Steiner, B.; Fang, L.;
  Bai, J.; and Chintala, S. 2019.
\newblock PyTorch: An Imperative Style, High-Performance Deep Learning Library.
\newblock In \emph{NIPS}.

\bibitem[{Qi et~al.(2020)Qi, You, Wang, Ma, and Malik}]{qi2020deep}
Qi, H.; You, C.; Wang, X.; Ma, Y.; and Malik, J. 2020.
\newblock Deep isometric learning for visual recognition.
\newblock In \emph{ICML}.

\bibitem[{Qian(1999)}]{qian1999momentum}
Qian, N. 1999.
\newblock On the momentum term in gradient descent learning algorithms.
\newblock In \emph{Neural networks}.

\bibitem[{Rapcs{\'a}k(2002)}]{rapcsak2002minimization}
Rapcs{\'a}k, T. 2002.
\newblock On minimization on Stiefel manifolds.
\newblock In \emph{European Journal of Operational Research}.

\bibitem[{Robbins and Monro(1951)}]{robbins1951stochastic}
Robbins, H.; and Monro, S. 1951.
\newblock A stochastic approximation method.
\newblock In \emph{The Annals of Mathematical Statistics}.

\bibitem[{Robles-Kelly and Hancock(2007)}]{robles2007riemannian}
Robles-Kelly, A.; and Hancock, E.~R. 2007.
\newblock A Riemannian approach to graph embedding.
\newblock In \emph{Pattern Recognition}.

\bibitem[{Rodr{\'\i}guez et~al.(2017)Rodr{\'\i}guez, Gonzalez, Cucurull,
  Gonfaus, and Roca}]{rodriguez2016regularizing}
Rodr{\'\i}guez, P.; Gonzalez, J.; Cucurull, G.; Gonfaus, J.~M.; and Roca, X.
  2017.
\newblock Regularizing cnns with locally constrained decorrelations.
\newblock In \emph{ICLR}.

\bibitem[{Saxe, McClelland, and Ganguli(2014)}]{saxe2013exact}
Saxe, A.~M.; McClelland, J.~L.; and Ganguli, S. 2014.
\newblock Exact solutions to the nonlinear dynamics of learning in deep linear
  neural networks.
\newblock In \emph{ICLR}.

\bibitem[{Shaw and Jebara(2009)}]{shaw2009structure}
Shaw, B.; and Jebara, T. 2009.
\newblock Structure preserving embedding.
\newblock In \emph{ICML}.

\bibitem[{Simonyan and Zisserman(2015)}]{simonyan2014very}
Simonyan, K.; and Zisserman, A. 2015.
\newblock Very deep convolutional networks for large-scale image recognition.
\newblock In \emph{ICLR}.

\bibitem[{Smith(1994)}]{smith1994optimization}
Smith, S.~T. 1994.
\newblock Optimization techniques on Riemannian manifolds.
\newblock In \emph{Fields Institute Communications}.

\bibitem[{Trockman and Kolter(2021)}]{trockman2021orthogonalizing}
Trockman, A.; and Kolter, J.~Z. 2021.
\newblock Orthogonalizing Convolutional Layers with the Cayley Transform.
\newblock In \emph{ICLR}.

\bibitem[{Vorontsov et~al.(2017)Vorontsov, Trabelsi, Kadoury, and
  Pal}]{vorontsov2017orthogonality}
Vorontsov, E.; Trabelsi, C.; Kadoury, S.; and Pal, C. 2017.
\newblock On orthogonality and learning recurrent networks with long term
  dependencies.
\newblock In \emph{ICML}.

\bibitem[{Wang et~al.(2020)Wang, Chen, Chakraborty, and Yu}]{Wang_2020_CVPR}
Wang, J.; Chen, Y.; Chakraborty, R.; and Yu, S.~X. 2020.
\newblock Orthogonal Convolutional Neural Networks.
\newblock In \emph{CVPR}.

\bibitem[{Wen and Yin(2013)}]{wen2013feasible}
Wen, Z.; and Yin, W. 2013.
\newblock A feasible method for optimization with orthogonality constraints.
\newblock In \emph{Mathematical Programming}.

\bibitem[{Wisdom et~al.(2016)Wisdom, Powers, Hershey, Roux, and
  Atlas}]{wisdom2016full}
Wisdom, S.; Powers, T.; Hershey, J.~R.; Roux, J.~L.; and Atlas, L. 2016.
\newblock Full-capacity unitary recurrent neural networks.
\newblock In \emph{NIPS}.

\bibitem[{Xiao et~al.(2018)Xiao, Bahri, Sohl-Dickstein, Schoenholz, and
  Pennington}]{xiao2018dynamical}
Xiao, L.; Bahri, Y.; Sohl-Dickstein, J.; Schoenholz, S.; and Pennington, J.
  2018.
\newblock Dynamical isometry and a mean field theory of cnns: How to train
  10,000-layer vanilla convolutional neural networks.
\newblock In \emph{ICML}.

\bibitem[{Xie, Xiong, and Pu(2017)}]{xie2017all}
Xie, D.; Xiong, J.; and Pu, S. 2017.
\newblock All you need is beyond a good init: Exploring better solution for
  training extremely deep convolutional neural networks with orthonormality and
  modulation.
\newblock In \emph{CVPR}.

\bibitem[{Zagoruyko and Komodakis(2016)}]{zagoruyko2016wide}
Zagoruyko, S.; and Komodakis, N. 2016.
\newblock Wide residual networks.
\newblock In \emph{BMVC}.

\bibitem[{Zhang et~al.(2016)Zhang, Tan, Sheng, Yao, and
  Shi}]{zhang2016efficient}
Zhang, W.~E.; Tan, M.; Sheng, Q.~Z.; Yao, L.; and Shi, Q. 2016.
\newblock Efficient orthogonal non-negative matrix factorization over Stiefel
  manifold.
\newblock In \emph{CIKM}.

\bibitem[{Zhu, Li, and Liang(2015)}]{zhu2015matrix}
Zhu, D.; Li, B.; and Liang, P. 2015.
\newblock On the matrix inversion approximation based on Neumann series in
  massive MIMO systems.
\newblock In \emph{ICC}.

\end{thebibliography}

\end{document}